\newtheorem{theorem}{Theorem}[section]
\renewcommand{\vec}[1]{\boldsymbol{\mathbf{#1}}}
\newcommand{\R}{\mathbb{R}}
\begin{document}
%
\title{A Compressed Sensing Based Decomposition of Electrodermal Activity Signals}

\author{Swayambhoo~Jain,
		Urvashi~Oswal,
        Kevin~S.~Xu,
        Brian~Eriksson,
        and~Jarvis~Haupt
        \thanks{SJ, JH are with the Department of Electrical and Computer Engineering, University of Minnesota -- Twin Cities, UO is with the Department of Electrical and Computer Engineering, University of Wisconsin -- Madison, KSX is with the Electrical Engineering and Computer Science Department, University of Toledo, BE is with Technicolor Research -- Los Altos. Author emails: {\tt \{jainx174, jdhaupt\}@umn.edu, uoswal@wisc.edu, kevin.xu@utoledo.edu, brian.eriksson@technicolor.com }}}

\maketitle

\begin{abstract}
The measurement and analysis of Electrodermal Activity (EDA) offers applications in diverse areas ranging from market research, to seizure detection, to human stress analysis. Unfortunately, the analysis of EDA signals is made difficult by the superposition of numerous components which can obscure the signal information related to a user's response to a stimulus.  We show how simple pre-processing followed by a novel compressed sensing based decomposition can mitigate the effects of the undesired noise components and help reveal the underlying physiological signal.  The proposed framework allows for decomposition of EDA signals with provable bounds on the recovery of user responses.  We test our procedure on both synthetic and real-world EDA signals from wearable sensors and demonstrate that our approach allows for more accurate recovery of user responses as compared to the existing techniques.
\end{abstract}

\begin{IEEEkeywords}
Galvanic skin response, electrodermal activity, compressed sensing, wearables, sparse deconvolution
\end{IEEEkeywords}

\IEEEpeerreviewmaketitle

\section{Introduction}
Electrodermal Activity, or EDA, is typically recorded as the conductance over a person's skin, near concentrations of sweat glands ({\em e.g.,} palm of the hand or finger tips~\cite{Taylor13}). EDA signals have been shown to include significant information pertaining to human neuron firing~\cite{nishiyama01} and psychological arousal~\cite{sidis10}.  While previously a signal that was only practically measured in a controlled laboratory setting, recent wearable devices, such as the Affectiva Q sensor~\cite{affectivaWhite} and the Empatica E4 sensor~\cite{empatica}, offer the ability to non-invasively measure EDA signals in real-world environments. 

An EDA signal is generally characterized by a slowly changing Skin Conductance Level (SCL) combined with several short-lived Skin Conductance Responses (SCRs).  The physiological explanation can be summarized as follows: the SCL is measuring the overall absorption of sweat in the user's skin, while each SCR is measuring a discrete event of sweat expulsion triggered by user excitement or psychological arousal in response to stimuli~\cite{benedek10}. We refer to these discrete events as \emph{SCR events}. The primary focus of prior EDA signal analysis has been to extract the informative SCR events from the observed signals, due to applications ranging from content valence classification~\cite{silveira13}, to audience cohort analysis~\cite{lian14}, to stress detection~\cite{lu12}.  This can prove to be quite challenging due to the overlap of SCR signal components, a dominant SCL  signal, signal artifacts due to motion, and the inclusion of measurement noise.  As a result, there are a large number of proposed techniques to extract SCR events from observed EDA signals~\cite{lim97, alexander05, benedek10, bach10, silveira13, cvxEDA, chaspari2015sparse}, which are discussed in detail later in the paper.

Unfortunately, these prior techniques have a series of drawbacks.  First, many of these techniques perform only simple heuristic-based approaches to extract the SCR events, which causes the techniques to be sensitive to noise and motion artifacts, \textit{i.e.}~sudden shifts in skin conductance due to changes in the position of the sensor.  Second, these techniques lack error bounds on the recovered SCR events, so there is no guarantee for accuracy.  Finally, most of the prior methods have ignored the contribution of motion artifacts.  As EDA becomes more commonly observed via wearable devices, it is more important to mitigate such motion artifacts.

In this paper, we offer a new, more realistic EDA signal model that considers the observed EDA signal as the superposition of a  { \emph{baseline} signal (signal component due to SCL changes and motion artifacts)}, informative SCR components, and measurement noise.  Given this cluttered observed signal, we discuss how existing signal de-mixing work ({\em e.g.,}~\cite{mccoy2013achievable, mccoy2014sharp}) indicates significant challenges in reliably extracting our desired sparse SCR event signal.  We overcome these challenges by providing a new signal model for the baseline signal component which captures changes in measured skin conductance due to motion as well as changes in SCL.  Further, we exploit this signal structure by a simple pre-processing step, which transforms this recovery problem into the more tractable problem of sparse deconvolution in the presence of bounded noise.

The problem of sparse deconvolution has been examined extensively in the compressed sensing literature ({\em e.g.,}~\cite{donoho2006compressed, haupt2010toeplitz,romberg2009compressive,rauhut2012restricted, yin2010practical,berger2010sparse}).  We show how our EDA problem setup requires additional changes to the standard compressed sensing problem. We use modified compressed sensing tools to estimate the SCR events using a concise optimization program and corresponding recovery error bounds. This results in ``first-of-its-kind'' EDA signal decomposition with known error rates.

We test this methodology on a series of both synthetic and real-world EDA signals. Using synthesized data we are able to sweep varying noise and sparsity levels to reveal regimes where our technique accurately recovers the sparse responses.  We then show on real-world EDA signals that user reactions to simple stimuli can be extracted with high accuracy compared with existing EDA decomposition algorithms.

The rest of the paper is organized as follows.  We review prior work on EDA signal analysis and compressed sensing in Section~\ref{sec:rel}.  Our refined model for observed EDA signals is detailed in Section~\ref{sec:model}.  Error bounds for our compressed sensing approach on EDA signals are shown in Section~\ref{sec:edaSingle}.  Experiments on both synthetic and real-world EDA signals are shown in Section~\ref{sec:exp}.  Finally, we conclude and discuss future work in Section~\ref{sec:conclusions}.

\section{Related Work}
\label{sec:rel}

The study of Electrodermal Activity signals, or EDA signals, dates back to the early $20^{\text{th}}$ century ({\em e.g.,} \cite{sidis10}) with the observation of a connection between changes in user skin conductance and psychological state.  In recent years, this connection has been validated by examining brain function via fMRI and skin conduction via EDA concurrently in~\cite{critchley00}, and by showing the specific regions of the brain that correspond with EDA changes and video recordings of sweat glands in~\cite{nishiyama01}.  The promise of EDA as a window into user psychology resulted in extensive work on evaluating the connection between EDA and user interactions~\cite{healey10}, stress detection~\cite{lu12}, content and audience segmentation~\cite{lian14}, and reaction to video content~\cite{silveira13}---to name only a few.

Applications using EDA signal analysis rely on the extraction of a user's fine-grained  responses embedded in the EDA signal called Skin Conductance Responses, or SCRs. These SCRs measure the expulsion of sweat triggered by a user's spike-like stimulus responses, which we call SCR events. SCR events are not explicitly observed in the EDA signal; we observe only the SCRs, which can be modeled as the convolution of the SCR events with a distinguishing impulse response. Significant prior literature has focused both on how to model the SCR impulse response and extract the SCR events from the observed EDA signal.  Examples include a parametric sigmoid-exponential model~\cite{lim97}, a bi-exponential impulse response~\cite{alexander05}, nonnegative deconvolution~\cite{benedek10}, and a variational Bayesian decomposition methodology~\cite{bach10}. These prior techniques are limited by either computational complexity~\cite{bach10} or overly simple models that ignore or heuristically remove additional EDA signal components, such as the SCL, that disguise the SCR events \cite{benedek10,alexander05}.  

The authors of \cite{benedek10} treat the SCL as a constant estimated by averaging the skin conductance signal over the time windows when the estimated SCR (by deconvolution) is below a certain amplitude.
The work of \cite{silveira13} presented a methodology to extract relevant SCR events while considering the SCL signal, but their matching pursuit-based technique used only a rough heuristic to remove this additional signal by deleting the two coarsest-scale components of a discrete-cosine transform applied to the skin conductance. 

More recent work has incorporated SCL in a more principled manner into the EDA signal model. The sparse representation of SCR signal was exploited in \cite{chaspari2015sparse}. In this work, the SCL signal was modeled as a slowly varying linear signal, and the SCR signal was modeled as a sparse linear combination of atoms of a dictionary containing time shifts of variety of function shapes. A greedy method exploiting the sparsity was also proposed for extracting the SCR events signal. Recently, the authors of \cite{cvxEDA} proposed an approach which exploited sparsity from a Bayesian perspective in which the SCL signal was modeled as a sum of cubic B-spline functions, an offset and a linear trend, whereas the SCR signal was modeled by a sparse signal in the dictionary obtained by shifts of bilinear transformations of a Bateman function. Following the maximum \textit{a posteriori} (MAP) estimation principle, a convex formulation was obtained which can be solved efficiently. In contrast to these works \cite{chaspari2015sparse,cvxEDA} we propose a model for the baseline signal that  incorporates shifts in skin conductance due to changes in the positioning of the sensors due to motion, which is crucial when data is collected using wearables. 

Given the sparse nature of the SCR events signal, in order to obtain bounds on our recovery, we leverage literature on compressed sensing \cite{donoho2006compressed}.  Usually focused on sparse signal inference after transformation by random sensing matrices, here we are informed by recent work on sparse deconvolution in a compressed sensing regime \cite{haupt2010toeplitz}, de-mixing of structured signals~\cite{mccoy2013achievable, mccoy2014sharp}, and corrupted sensing for signals with known structure~\cite{foygel2014corrupted}.  Our analysis differs from this prior work via the inclusion of a baseline signal model.  
This requires significant reformulation of the problem to develop new theory and recovery methodologies. 

\section{Model}
\label{sec:model}

The observed EDA skin conductance signal is typically characterized by two dominant components.  The first is a slowly varying Skin Conductance Level (SCL), also referred to as the ``tonic'' component.  The second component is the observation of multiple Skin Conductance Responses (SCRs) arising each from a corresponding SCR event. This component is sometimes referred to as the ``phasic'' component.  These two signal types are detailed in Figure~\ref{Fig:model_fig}. 

\begin{figure}[tb]
\begin{center}
\includegraphics[scale=0.70]{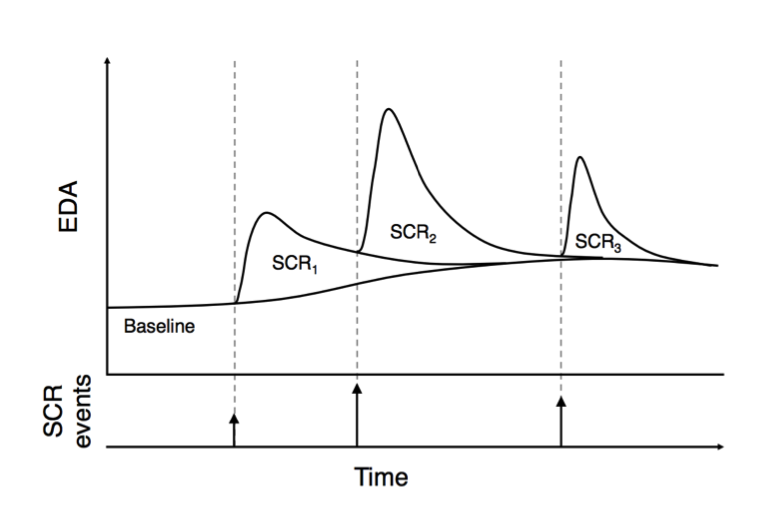}
\end{center}
\caption{ An example of EDA signal where the Skin Conductance Responses
(SCRs) resulting from SCR events signal are shown \cite{silveira13}.} \label{Fig:model_fig}
\end{figure}

\begin{table*}[tb]
\caption{EDA Signal Notation Summary}
\label{table:notation}
\begin{center}
\renewcommand{\arraystretch}{1.2}
\begin{tabular}{|c|c|l|}
\hline
Component & Model Notation & Description \\ \hline
\textbf{Baseline} & ${\vec{b}}$ & {\em Baseline Signal} - Slowly varying skin conductance level with jump discontinuities due to motion\\
\textbf{SCR Events} & ${\vec{x}}$ &  {\em Skin Conductance Response Events} - Signal of sparse stimulus response events from the user \\
\textbf{SCR} & ${\vec{h}*\vec{x}}$ & {\em Skin Conductance Response} - Measured sweat expulsion resulting from the SCR events \\
\textbf{Noise} & ${\vec{n}}$ & Additive noise observed from measurement process and model mismatch\\
\hline
\end{tabular}
\end{center}
\end{table*}

\begin{figure*}
	\begin{center}
		\includegraphics[scale=0.55]{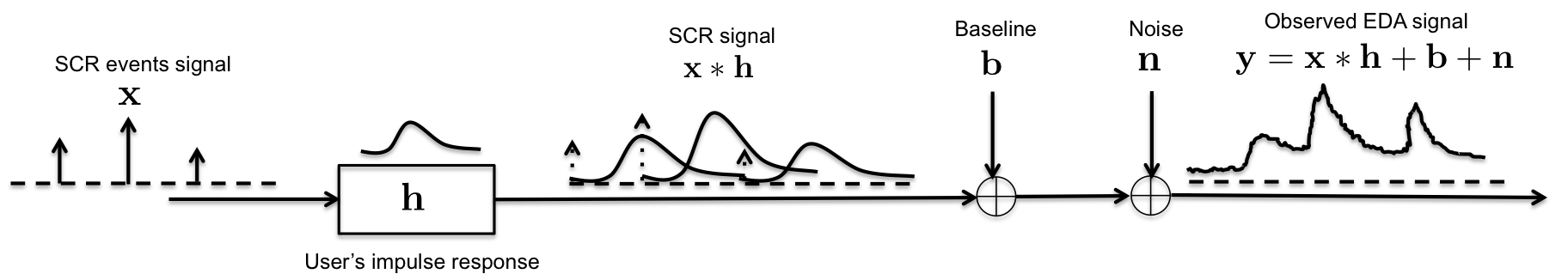}	
	\end{center}
	\caption{Observation model showing the various components in the observed EDA signal.}\label{Fig:Obs_model}
\end{figure*}

The user's physiology explains the existence of these two signal components.
The SCRs are driven by occurrences of SCR events, a sparse selection of events where the user has responded with psychological arousal or excitement to stimulus. The SCR events signal is denoted by the impulse train at the bottom of Figure~\ref{Fig:model_fig}. Prior research in the psychophysiology community ({\em e.g.,}~\cite{benedek10}) has recognized that these SCR events ({\em i.e.,} user excitement events) are correlated with sudomotor neuron bursts, resulting in a user's eccrine glands to expel sweat.  This sweat causes changes in skin conductance in the form of an SCR observation in the shape similar to that shown in Figure~\ref{Fig:model_fig}.  This shape is the result of expelling, pooling, and evaporation of sweat on the surface of the user's skin.  

Additionally, this act results in some sweat being absorbed into the surface of the user's skin, which affects the SCL. We consider the SCL to be a slowly varying signal.  The SCL signal can be changed by temperature, humidity, and other environmental factors along with the physiology of the user ({\em e.g.,} thickness of the user's skin).

In addition to the SCL, there may also be sudden shifts in the skin conductance caused by changes in the positioning of the sensors or the amount of contact of the sensors with the skin, especially in the wearable sensor setting.
Such changes are often reflected by jump discontinuities in the skin conductance. 
We account for such discontinuities, as well as the SCL, in what we call the \emph{baseline} signal component.

\subsection{Model Definition}
Let us consider an observed EDA signal, $\vec{y}$, discretized into $T$ time steps. At each time step there is the possibility of an SCR event. We denote the SCR events signal corresponding to this content by a vector $\vec{x} \in \mathbb{R}^T$, where each component represents the intensity of the user's reaction to the $T$ possible events. Whenever the user has an SCR event, prior research has shown ({\em e.g.,}~\cite{lim97,benedek10,alexander05}) that there are typical ways in which the EDA measurements record conductance changes. We denote this typical sweat response of an user by a vector $\vec{h} \in \mathbb{R}^t$.   In the past \cite{benedek10,alexander05}, the resulting SCR signal has been modeled as a linear time-invariant (LTI) system where the SCR events signal $\vec{x}$ is convolved with the sweat response signal $\vec{h}$ which we denote as $\vec{h} \ast \vec{x} \in \mathbb{R}^{t+T-1}$. 

As mentioned earlier, the SCR signal $\vec{h} \ast \vec{x}$ is superimposed with a baseline signal consisting of SCL and motion artifacts. Denote the baseline signal as $\vec{b} \in \mathbb{R}^{t+T-1}$ and the errors arising due  to observation noise and model mismatch as $\vec{n} \in \mathbb{R}^{t+T-1}$. These notations are summarized in Table~\ref{table:notation}. The observed EDA signal can now be represented as 
\begin{equation}\label{eqn:obs_model}
\vec{y} = \vec{h} \ast \vec{x} + \vec{b} + \vec{n}.
\end{equation}
The final observation model is shown in Figure~\ref{Fig:Obs_model}. Given prior work on the shape of the SCR impulse response $\vec{h}$,
we assume that the impulse response is known \textit{a priori} (we discuss the specific choice of $\vec{h}$ in Section~\ref{sec:exp}).  We consider the SCR events signal $\vec{x}$, the baseline $\vec{b}$, and noise $\vec{n}$ to all be unknown.  

In this paper we propose a model for the observed EDA signal $\vec{y}$ that accounts for both the baseline $\vec{b}$ and observation noise $\vec{n}$ in a principled manner.  This requires further specifications on the signals $\vec{x}$, $\vec{b}$,  and noise $\vec{n}$ which we detail in the following.
 
\subsection{SCR Events Signal Model}
Due to physiology, there are limitations to how often humans can generate SCR events. Motivated by this, we impose a sparsity assumption on the SCR events signal. Specifically, we assume that there are no more than $s < T$ events to which a user responds significantly. More formally, the SCR events signal is assumed to lie in the set
\begin{equation}
\mathcal{X}_{\delta}^s = \left\{\vec{x} \,\Big|\, \vec{x} \in \R^T, 
 \left\| \vec{x} - \vec{x}_s \right\|_1 \leq 
\delta \right\}, 
\end{equation}
where $\delta$ is a small constant and $ \vec{x}_s \in \R^T$ with exactly $s$ non-zero components obtained by retaining the $s$-largest magnitude components of $\vec{x}$.

The above set is the collection of vectors which can be approximated within some distance (in terms of the $\ell_1$-norm) $\delta$ from an exactly $s$-sparse signal. Notice that when $\delta = 0$, the above set is the set of $s$-sparse vectors in $\R^{t+T-1}$.
We note that in most prior literature, the model for the SCR events signal is strictly positive. Here we drop this constraint for a simpler analysis of recovery guarantees.  Our experimental results in Section \ref{sec:exp} show that even without positivity constraints comparable performance can be achieved.

\subsection{Baseline Model}
\begin{figure}[t]
\begin{center}
\includegraphics[scale=0.70]{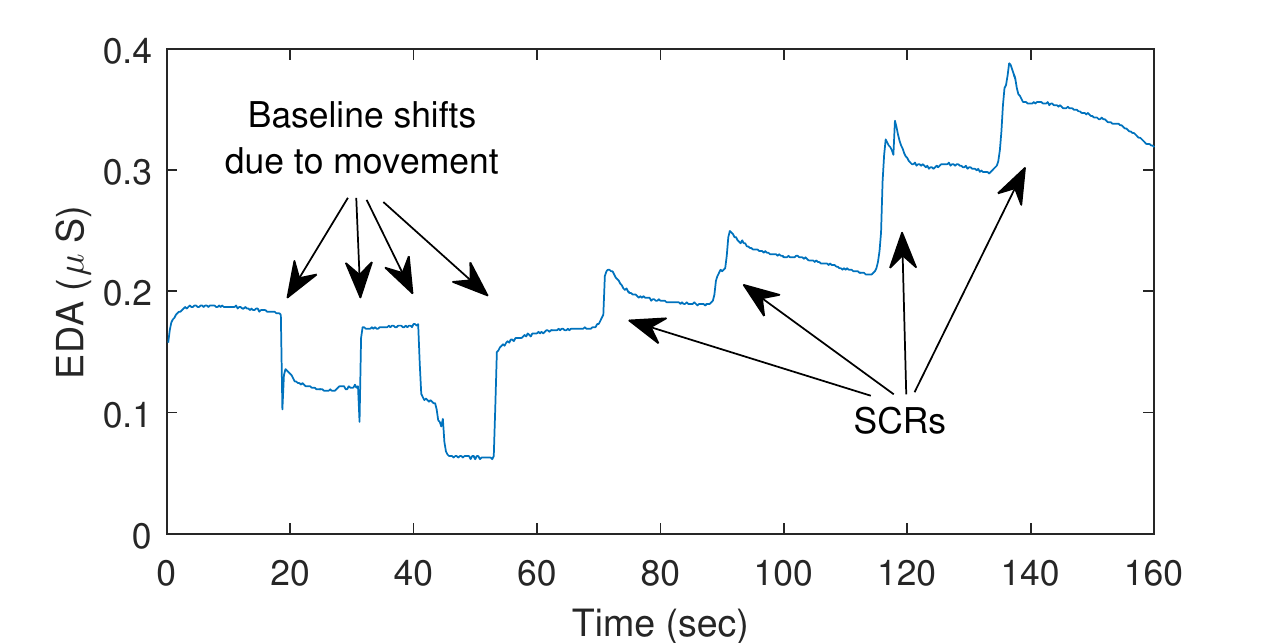}
\end{center}
\caption{An example EDA signal collected using a commercially available wearable EDA sensor showing the impact of baseline shifts due to movement.} \label{Fig:Baseline_Example}
\end{figure}

We propose a novel baseline model, inspired by the wearable setting where changes in the positions of sensors due to movement may lead to rapid changes in the EDA signal. These rapid changes, or baseline shifts, are illustrated in Figure \ref{Fig:Baseline_Example} along with several SCRs. To the best of our knowledge, such baseline shifts have not been examined by previous work on recovering SCR events.
We incorporate these baseline shifts along with the SCL component into a baseline signal $\vec{b}$. We assume $\vec{b}$ changes its magnitude significantly or  has jump discontinuities at no more than $c < t+T-1$ locations. More formally, the baseline signal is assumed to lie in the set
\begin{equation}
\mathcal{B}_{\gamma}^c = \left\{\vec{b} \,\Big|\, \vec{b} \in \R^{t+T-1}, 
 \left\|\vec{Db}-(\vec{Db})_c\right\|_1 \leq
\gamma \right\},
\end{equation}
where    $\vec{D} \in \mathbb{ R}^{(t+T-2) \times (t+T-1) }$ denotes the pairwise difference matrix defined by
\begin{equation}
\label{eqn:pair_diff_mat}
\vec{D} = 
\begin{bmatrix}
1 & -1 & 0 & \cdots & 0 \\
0 & 1 & -1 & \cdots & 0 \\
\vdots & \vdots & \ddots & \ddots & \vdots \\
0 & 0 & \cdots & 1 & -1
\end{bmatrix}
\end{equation}
so that $\mathbf{Db}=[b_1-b_2,b_2-b_3,\ldots,b_{t+T-2}-b_{t+T-1}]$  and   $(\vec{Db})_c \in \R^{t+T-2}$ with exactly $c$ non-zero components obtained by retaining the $c$-largest magnitude components of $\vec{Db}$.
Hence the baseline signal, after pairwise differencing, is assumed to be 
within some distance (in terms of the $\ell_1$-norm) $\gamma$ from 
a $c$-sparse signal.  

\subsection{Bounded Noise Model}
Finally, we consider the additional noise induced by the wearable sensor recording the EDA signals as well as potential model mismatch. Rather than assuming a form for the distribution of this term, we will simply assume that the noise and model inaccuracies are bounded by a fixed value, \emph{i.e.}~$\| \vec{n}  \|_2 \le \epsilon/2$ where $\epsilon > 0$. Here the constant factor $1/2$ is included only to simplify further analysis. 

\subsection{Problem Overview}

The goal of this paper is to obtain the SCR events signal $\vec{x}$ from the EDA observation signal $\vec{y}= \mathbf{h} * \mathbf{x} + \mathbf{b} + \mathbf{n}$ given the prior information that $\vec{x} \in \mathcal{X}^s_\delta $ and $\vec{b} \in  \mathcal{B}^c_\gamma $. 
We assume that the impulse response $\vec{h}$ is known, but the baseline $\vec{b}$, the SCR events signal $\vec{x}$, and the measurement noise $\vec{n}$ are all unknown.

\section{EDA Signal Decomposition}
\label{sec:edaSingle}

The task of recovering the true SCR events $\vec{x}$ from the observed EDA signal $\vec{y}$ is particularly challenging due to the presence of the baseline $\vec{b}$. 
For example, consider the setting when there is an observed signal with no baseline and no noise, {\em i.e.,} $\vec{b}= \vec{0}$, $\vec{n} = \vec{0}$, and $\vec{y} = \vec{h}*\vec{x}$.  The problem of recovering $\vec{x}$ from $\vec{y}$ simply reduces to solving an over-determined linear system of equations given knowledge of $\vec{h}$.  As a result, this problem can be solved with standard deconvolution techniques given very mild assumptions on $\vec{h}$ and without any assumptions needed on true $\vec{x}$.  

In another case, consider there is no baseline but noise is present, {\em i.e.,} $\vec{b} = \vec{0}$, $\vec{n} \neq \vec{0}$, and $\vec{y} = \vec{h}*\vec{x} + \vec{n}$.  This is a standard problem of deconvolution in noise, which in general is a difficult problem to solve.  But, when we consider the added structure of the sparsity of SCR events signal $\vec{x}$, one could exploit this to estimate $\vec{x}$ with provable guarantees.  This setting has been explored in prior work in the field of compressed sensing, {\em e.g.,} \cite{haupt2010toeplitz}.  

\subsection{Dealing with the Baseline Signal}

The main challenge here is the case where the baseline signal is present and non-zero.  One obvious approach could be to consider the baseline as noise and follow previously proposed deconvolution for noisy settings {\em e.g.,} \cite{haupt2010toeplitz}.  However, this would likely fail because the baseline $\vec{b}$ could have very large magnitude.  Our proposed alternative is to exploit the structure of the baseline signal to facilitate the recovery of $\vec{x}$. We linearly transform the baseline signal and jointly recover the transformed baseline and $\vec{x}$. This is often known as a \emph{de-mixing} problem, and there has been recent work on using convex techniques for de-mixing structured signals
\cite{mccoy2013achievable, mccoy2014sharp}. These papers have theoretical  guarantees in terms of statistical dimension. Unfortunately, these guarantees assume a specific random signal generation model which does not hold true for our problem setting. 

Recent work has proposed a \emph{corrupted sensing} approach \cite{foygel2014corrupted} which extends compressed sensing to a setting where observations are corrupted with structured signals. Our problem is different from this setup on two counts: (1) Our sparse signal is convolved with a known SCR impulse response and (2) the  baseline signal in our setting has structure that has not yet been considered in the corrupted sensing literature.  Hence, we leave this as an interesting future direction.

\subsection{EDA Signal Preprocessing}

\begin{figure*}
\begin{center}
\includegraphics[scale=0.50]{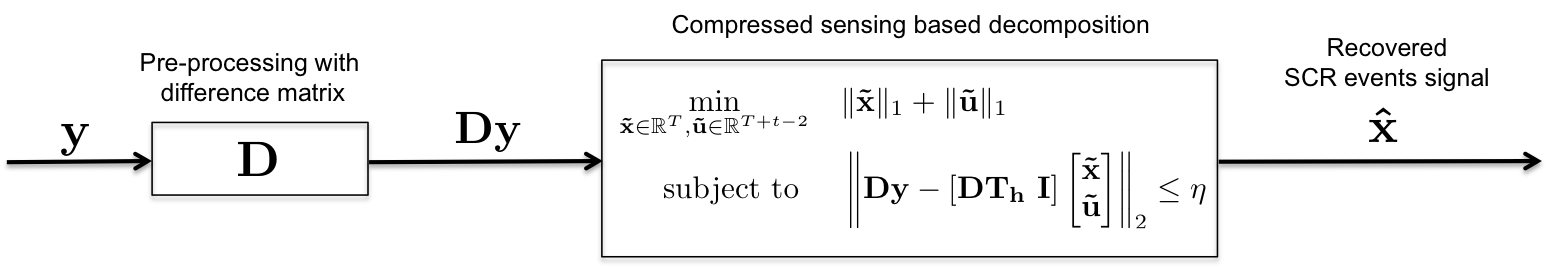}
\end{center}
\caption{Block diagram showing the SCR events signal recovery using compressed sensing based decomposition.} \label{Fig:Proc_tech}
\end{figure*}

We propose an approach that exploits the structure of the EDA signals to mitigate the effects of the baseline signal.  Namely, we can consider that the baseline signals have almost the same consecutive components for most of the signal elements.  As a result, they can be converted to approximately sparse signals by multiplying with the pairwise difference matrix $\vec{D}$ defined in \eqref{eqn:pair_diff_mat}.

Of course, we only have access to the observed signal, $\vec{y}$.  Therefore,
we follow a very simple approach in which we linearly transform the observation $\vec{y}$ using the difference matrix $\vec{D}$ as follows:
\begin{align}
\vec{Dy} = \vec{D}\vec{T_h} \vec{x} + \vec{Db} + \vec{Dn},
\end{align}
where $\vec{T_h}$ denotes a $(t +T -1)\times T$ Toeplitz matrix constructed from a vector $\vec{h} \in \mathbb{R}^t$ and is defined as follows: 
\begin{align}
\mathbf{T_h} = \underbrace{\begin{bmatrix}
	h_1 & 0 & \cdots & 0 \\ 
	h_2 & h_1 & \vdots & \vdots \\
	\vdots & \vdots & \ddots & 0\\
	h_t  & h_{t-1} & \vdots & h_1\\
	0 & h_{t} & \\ 
	\vdots & \vdots & \ddots \\
	0 & \cdots & \cdots & h_t
	\end{bmatrix} }_{ T \ \textrm{columns} }  
\end{align}
such that the convolution between vectors $\vec{h} \in \mathbb{R}^t$ and $\vec{x} \in \mathbb{R}^{T} $, denoted by $\vec{h} \ast \vec{x}$, is a vector in  $\mathbb{R}^{t+T-1}$ and can be written in terms of matrix-vector multiplications as $ \vec{h} \ast \vec{x} = \mathbf{T_h} \vec{x}$. 

With this transformation, the modified baseline signal $\vec{Db}$ is approximately sparse because of the structure of $\vec{b} \in \mathcal{B}_\gamma^c$.  Due to this sparsity, the transformed baseline signal has similar structure to the true SCR events signal $\vec{x}$.  We leverage this fact to jointly estimate $\vec{x} $ and $\vec{Db}$.  Rearranging this term, the observation model becomes 
\begin{align*}
\vec{Dy} = \begin{bmatrix}
\vec{DT_h} & \vec{I}
\end{bmatrix} \begin{bmatrix}
\vec{x} \\ \vec{Db}
\end{bmatrix}+ \vec{Dn},
\end{align*}
where $\vec{I}$ denotes the identity matrix. We have transformed this problem into estimating a vector that is approximately sparse with $s+c$ significant components in $\mathbb{R}^{t+T -2}$, where $s$ is the number of significant non-zero elements in $\vec{x} $, and $c$ is the number of significant non-zeros in $\vec{Db}$. 

Using recent advances in compressed sensing \cite{studer2014stable}, we propose to solve the following problem to estimate $\vec{x}$ and $\vec{Db}$: 
\begin{equation}\label{prob:relaxed}
\begin{split}
\min_{\mathbf{\tilde x} \in \mathbb{R}^T, \mathbf{\tilde u}} \in \mathbb{R}^{T+t-2}   \quad  &  \| \mathbf{\tilde x} \|_1 + \| \mathbf{\tilde u} \|_1  \\
\textrm{ subject to } \quad &  \left\| \mathbf{Dy} - [ \mathbf{D T_{h}} \ \mathbf{I} ]\begin{bmatrix} \mathbf{\tilde x} \\ \mathbf{\tilde u}  \end{bmatrix}  \right\|_2 \le \eta, 
\end{split}
\end{equation}
where $\eta>0$ is a parameter that can be chosen based on the energy of noise $\vec{n}$ as detailed in the next subsection. The above problem is known to be a convex problem which can be solved by using well-known convex optimization software ({\em{e.g.,}} CVX~\cite{cvx}). The final recovery procedure based on above discussion is summarized in Figure~\ref{Fig:Proc_tech}. We note that our problem has Toeplitz structure which can be exploited for developing computationally efficient algorithm using the ideas from matrix-free convex optimization modeling \cite{diamond2015matrix,becker2012tfocs}. We leave this as an interesting future direction of work.

\subsection{Error Guarantees}
The fundamental question that arises here is how well the estimates obtained by solving above problem work. Specifically, how close is the optimal solution $\mathbf{\hat{x}}$ of \eqref{prob:relaxed} to the true SCR events signal $\mathbf{x}$? We have the following theorem to specifically detail the error in our recovered SCR events signal.
\begin{theorem} \label{thm:error_bound}
	Let $ \mathbf{y} = \mathbf{h} \ast \mathbf{x}  + \mathbf{b}  + \mathbf{n} $, where $ \mathbf{x}  \in \mathcal{X}_\delta^s, \mathbf{b}  \in \mathcal{B}_\gamma^c  $.  Denote $\vec{C} = [ \vec{DT_h} \ \vec{I}  ]$ and define the coherence parameters $\mu_h, \mu_m, \mu_c$ as 
	\begin{align*}
	&\mu_h = \max_{i \ne j} \ \frac{ | \vec{t}_i^T \vec{t}_j   |}{ \|\vec{t}_i\|_2 \|\vec{t}_j\|_2 } , \ \mu_c = \max_{i \ne j} \ \frac{ | \vec{c}_i^T \vec{c}_j   |}{ \|\vec{c}_i\|_2 \|\vec{c}_j\|_2 }\\
	&\mu_m = \max_{i , j} \ \frac{ | \vec{t}_i^T \vec{e}_j   |}{ \|\vec{t}_i\|_2 \|\vec{e}_j\|_2 }
	\end{align*}
	where $\vec{t}_i, \vec{e}_i, \text{ and } \vec{c}_i$ are the $i^{th}$ columns of matrices $\vec{DT_h}, \vec{I}, \text{ and } \vec{C} $, respectively. If  $\|\mathbf{n}\| \le \epsilon / 2$ and 
	\begin{align*}
	 s + c < \textrm{max}\left\{ \frac{2(1+\mu_h)}{\mu_h + 2 \mu_c + \sqrt{\mu_h^2 + \mu_m^2}}    , \frac{1+\mu_c}{2 \mu_c}\right\} , 
	\end{align*}
then the solution $\mathbf{\hat x, \hat u }$ of \eqref{prob:relaxed}
	using $\epsilon \le \eta$ satisfies 
	\begin{align*}
	\left\| \mathbf{x}  - \mathbf{ \hat x} \right\|_2  \le C_1(\epsilon + \eta) + 
	C_2( \delta +  \gamma ) 
	\end{align*}
	where $C_1, C_2>0$ depend on $\mu_c$, $\mu_h$, $\mu_m$, $s$, and $c$. 
\end{theorem}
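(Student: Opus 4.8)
The plan is to reduce the statement to a coherence-based stable-recovery guarantee for a single approximately sparse vector observed through the concatenated dictionary $\mathbf{C}=[\mathbf{DT_h}\ \mathbf{I}]$, which is precisely the setting of \cite{studer2014stable}. Write $\mathbf{z}=[\mathbf{x}^T\ (\mathbf{Db})^T]^T$ so that the pre-processed observation reads $\mathbf{Dy}=\mathbf{Cz}+\mathbf{Dn}$, and let $\hat{\mathbf{z}}=[\hat{\mathbf{x}}^T\ \hat{\mathbf{u}}^T]^T$ denote the minimizer of \eqref{prob:relaxed}. Since $\hat{\mathbf{x}}$ is just the first block of $\hat{\mathbf{z}}$, any bound on $\|\mathbf{z}-\hat{\mathbf{z}}\|_2$ immediately upper-bounds the quantity $\|\mathbf{x}-\hat{\mathbf{x}}\|_2$ we care about, so it suffices to control the full coefficient error.

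Three ingredients feed into the cited recovery theorem. First, the \emph{effective noise} must be bounded: the difference operator satisfies $\|\mathbf{D}\|_2\le 2$ (its Gram matrix $\mathbf{DD}^T$ is tridiagonal with diagonal $2$ and off-diagonal $-1$, whose eigenvalues lie strictly below $4$), hence $\|\mathbf{Dn}\|_2\le 2\|\mathbf{n}\|_2\le\epsilon$. This is exactly where the factor $1/2$ in the noise model is consumed, and since $\epsilon\le\eta$ it certifies that the true $\mathbf{z}$ is feasible for \eqref{prob:relaxed}. Second, the \emph{sparsity defect} of $\mathbf{z}$ must be quantified: taking the $(s+c)$-sparse vector $[\mathbf{x}_s^T\ (\mathbf{Db})_c^T]^T$ as a witness gives $\|\mathbf{z}-\mathbf{z}_{s+c}\|_1\le\|\mathbf{x}-\mathbf{x}_s\|_1+\|\mathbf{Db}-(\mathbf{Db})_c\|_1\le\delta+\gamma$, so $\mathbf{z}$ is approximately $(s+c)$-sparse with tail budget $\delta+\gamma$. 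Third, the \emph{coherence hypotheses} must be matched: the two blocks of $\mathbf{C}$ have self-coherences $\mu_h$ (for $\mathbf{DT_h}$) and $0$ (the columns of $\mathbf{I}$ are orthonormal), cross-coherence $\mu_m$, and aggregate coherence $\mu_c$, which are exactly the quantities appearing in the block-coherence and aggregate-coherence recovery conditions of \cite{studer2014stable}.

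With these in place I would invoke the stable recovery result of \cite{studer2014stable} in two regimes: once under the refined block-coherence condition $s+c<\tfrac{2(1+\mu_h)}{\mu_h+2\mu_c+\sqrt{\mu_h^2+\mu_m^2}}$ that exploits the incoherence between $\mathbf{DT_h}$ and $\mathbf{I}$, and once under the classical aggregate-coherence condition $s+c<\tfrac{1+\mu_c}{2\mu_c}=\tfrac12(1+1/\mu_c)$; taking the $\max$ over the two regimes yields the stated sufficient condition. In either regime the theorem provides constants $C_1',C_2'>0$ depending only on the coherences and on $s+c$ such that $\|\mathbf{z}-\hat{\mathbf{z}}\|_2\le C_1'(\|\mathbf{Dn}\|_2+\eta)+C_2'(\delta+\gamma)$. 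Combining with $\|\mathbf{Dn}\|_2\le\epsilon$ and the block-projection inequality $\|\mathbf{x}-\hat{\mathbf{x}}\|_2\le\|\mathbf{z}-\hat{\mathbf{z}}\|_2$ delivers $\|\mathbf{x}-\hat{\mathbf{x}}\|_2\le C_1(\epsilon+\eta)+C_2(\delta+\gamma)$ with $C_1=C_1'$, $C_2=C_2'$, which is the claim.

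The main obstacle I anticipate is \emph{bookkeeping at the interface with} \cite{studer2014stable} rather than any single hard estimate: I must verify that the coherence parameters as defined here (with the column normalizations by $\|\mathbf{t}_i\|_2$ and $\|\mathbf{c}_i\|_2$) line up exactly with the normalized-dictionary conventions of that paper, confirm that its guarantee is stated for the inequality-constrained basis-pursuit-denoising program rather than an equality-constrained one, and track the explicit forms of $C_1,C_2$ so that their advertised dependence on $\mu_c,\mu_h,\mu_m,s,c$ holds. A secondary subtlety is that the columns of $\mathbf{DT_h}$ need not have unit norm, so I would either normalize the dictionary and absorb the resulting diagonal scaling into the coefficients (and hence into $C_1,C_2$) before applying the cited theorem, or verify that the coherence definitions above already encode this normalization, as they appear to.
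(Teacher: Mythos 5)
Your proposal is correct and follows essentially the same route as the paper's own proof: both reduce the problem to Theorem 4 of \cite{studer2014stable} applied to the dictionary $\mathbf{C}=[\mathbf{DT_h}\ \mathbf{I}]$ and the stacked vector $[\mathbf{x}^T\ (\mathbf{Db})^T]^T$, bound the effective noise via $\|\mathbf{Dn}\|_2\le\|\mathbf{D}\|_2\|\mathbf{n}\|_2\le 2(\epsilon/2)=\epsilon$, bound the sparsity defect by $\delta+\gamma$, and finish with the block-projection inequality $\|\mathbf{x}-\hat{\mathbf{x}}\|_2\le\|\mathbf{z}-\hat{\mathbf{z}}\|_2$. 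Your explicit spectral justification that $\|\mathbf{D}\|_2\le 2$ and your identification of $\mu_b=0$ for the identity block match (and slightly elaborate on) the steps the paper carries out.
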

\begin{proof}
See Appendix.
\end{proof}

\begin{figure*}
	\centering
	\subfloat[]{\includegraphics[width=0.31\linewidth]{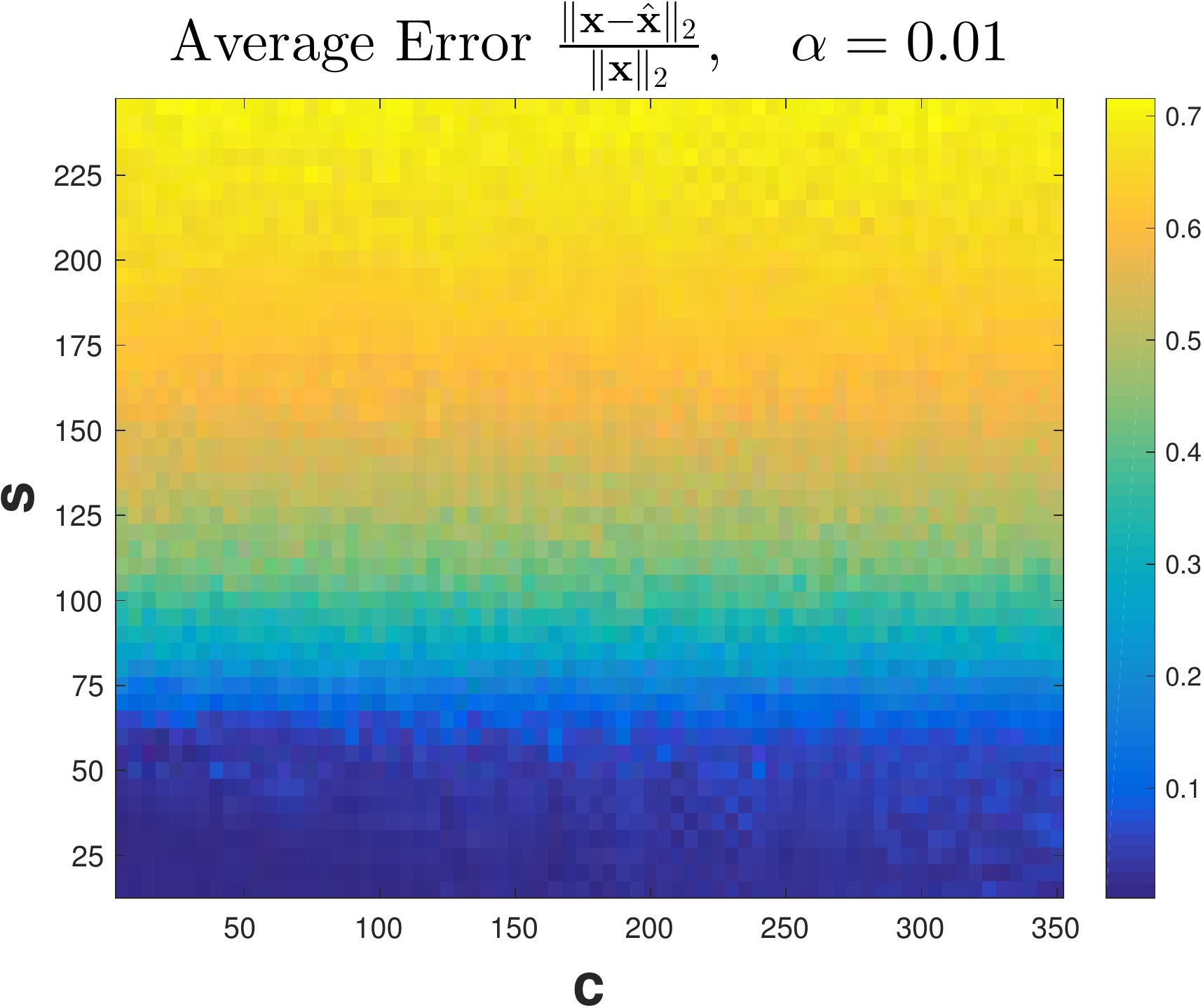} \label{Fig:PhaseDiag001}} \quad
	\subfloat[]{\includegraphics[width=0.31\linewidth]{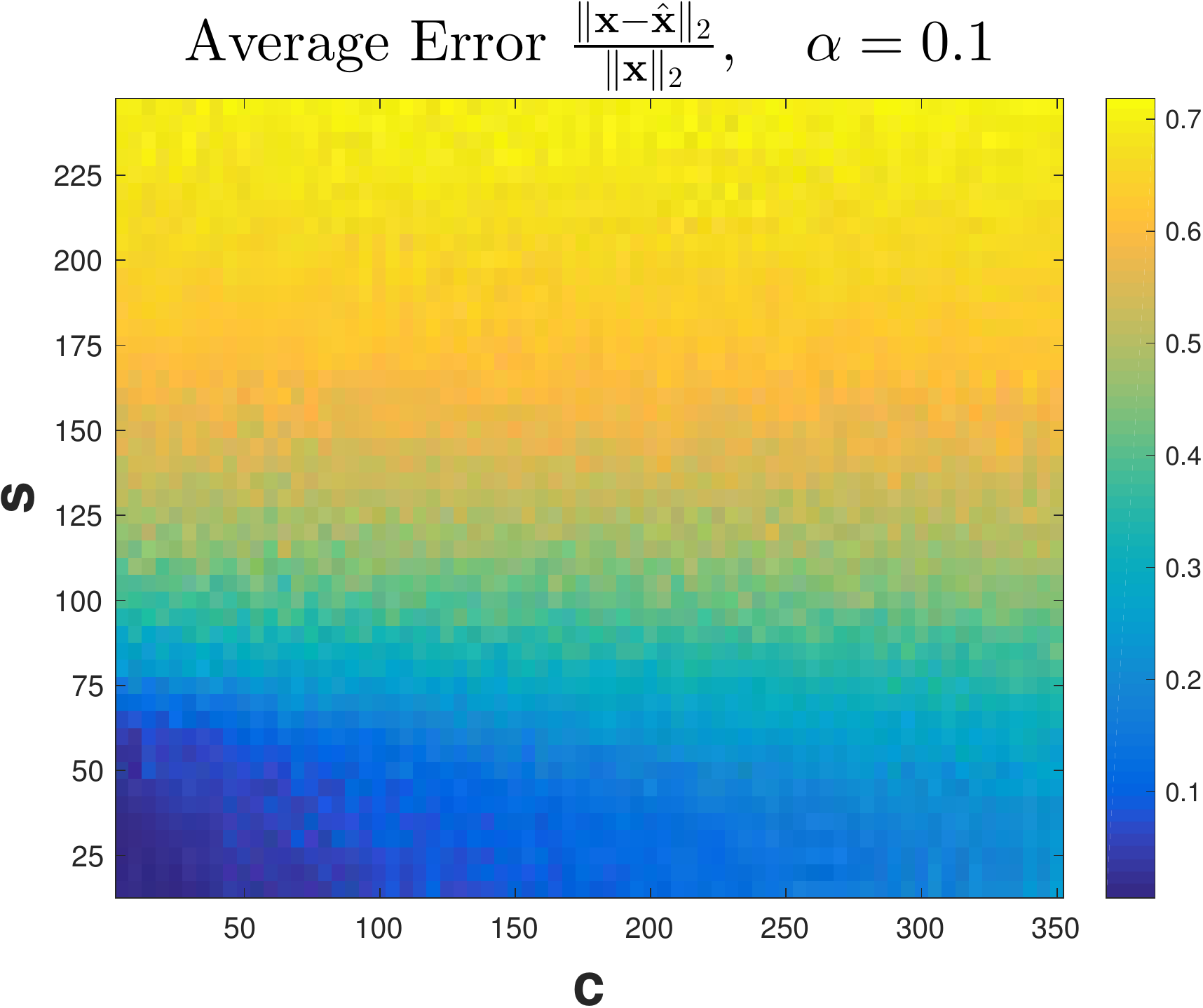} \label{Fig:PhaseDiag01}} \quad
	\subfloat[]{\includegraphics[width=0.31\linewidth]{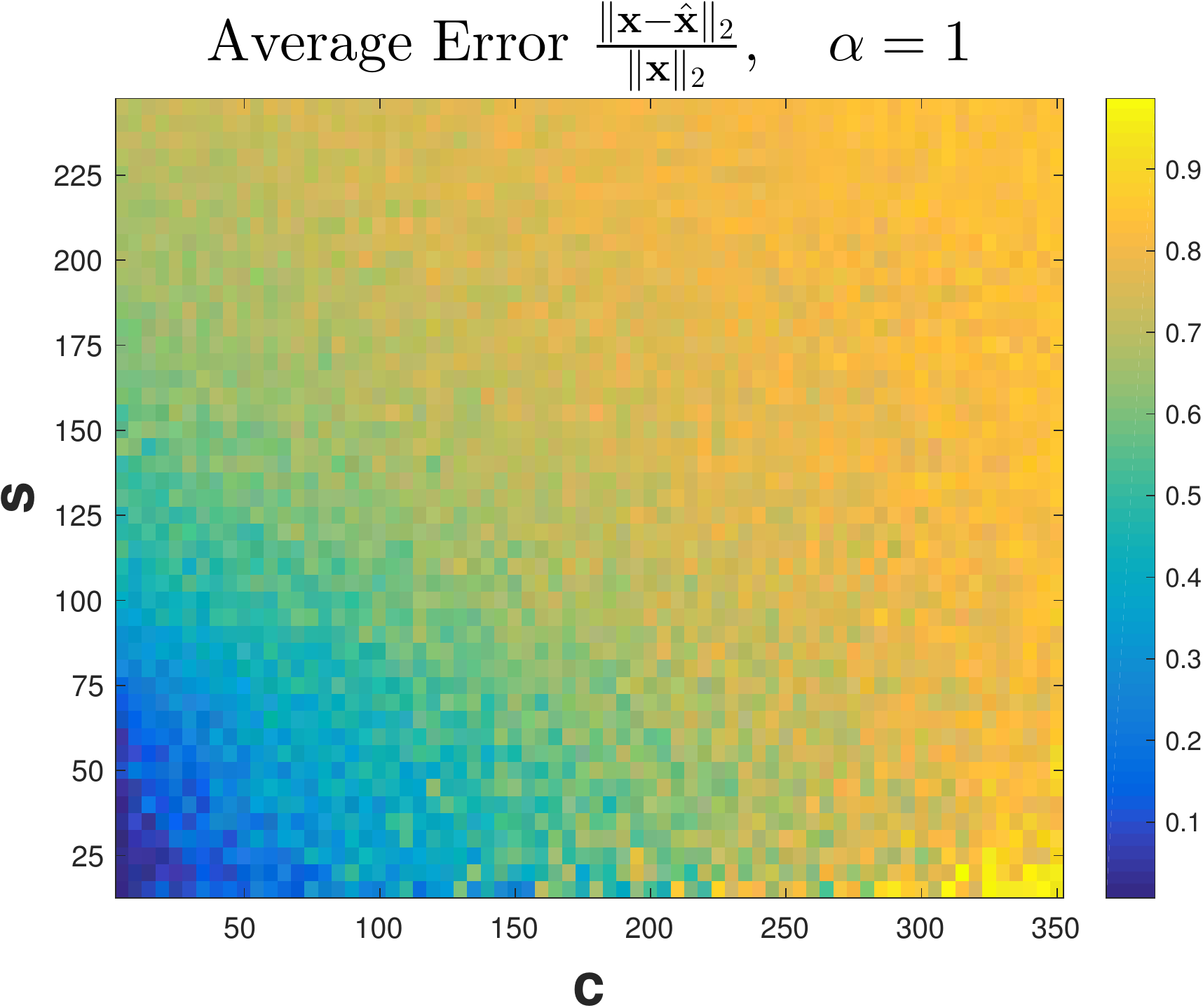} \label{Fig:PhaseDiag1}}
	\caption[]{Estimation error diagrams with synthetic data for various values of number of SCR events $s \in \{5, 10, \dots, 230\}$ and baseline jumps $c \in \{ 5, 10, \dots, 350\}$. Panels \subref{Fig:PhaseDiag001}, \subref{Fig:PhaseDiag01}, and \subref{Fig:PhaseDiag1} correspond to scaling the magnitude of the baseline component using $\alpha = 0.01, 0.1$ and $1$, respectively.} \label{Fig:PhaseDiag}
\end{figure*}

The above theorem states that, when the combined sparsity of the true SCR events signal and the baseline signal after the difference filter is small enough, the estimate of the SCR events signal $\vec{\hat x}$ is accurate. More specifically, the $\ell_2$ norm of the error vector (\textit{i.e.,} the difference between the true and the estimated SCR events signal) is upper bounded by a quantity which is proportional to the constants $\epsilon, \delta $ and $\gamma$, which are part of our signal model, and the optimization parameter $\eta$, provided that it is chosen to be greater than or equal to $\epsilon$. As long as these constants are small, our approach yields an accurate solution. 
In our setting, it is reasonable to assume that these constants are indeed small for the following reasons. The SCR events signal $\vec{x}$ is sparse due to physiological reasons, as previously discussed. The baseline signal should not have too many jump discontinuities provided that the user is not constantly moving the sensor, which causes $\vec{Db}$ to also be sparse.  
Finally, $\epsilon$ depends on the noise power and model mismatch and is small provided that the noise power is much lower than the signal power and that our model assumptions are close to reality.

The terms $C_1$ and $C_2$ are known to decrease with decreasing $s,c$  \cite{studer2014stable}. This implies that the error in the recovery decreases as the signals become more sparse. The range of values of $s+c$ for which the error bounds holds depends on the coherence parameters. These parameters critically depend on the shape and length of $\vec{h}$ which we assume are known.  It is known that with decreasing coherence parameters $\mu_c$, $\mu_h$, and $\mu_m$, the recovery of a sparse signal improves \cite{studer2014stable}. All the coherence parameters can be viewed as the maximum entries of the sub-blocks of the matrix 
\begin{align*}
\vec{G} &= \begin{bmatrix}
(\vec{DT_h \Lambda})^T  \\ 
\vec{I} 
\end{bmatrix}\begin{bmatrix}\vec{DT_h \Lambda} & 
\vec{I} 
\end{bmatrix} - \vec{I} \\
& = \begin{bmatrix} (\vec{DT_h \Lambda})^T  \vec{DT_h \Lambda} - \vec{I} & 
(\vec{DT_h \Lambda})^T \\
\vec{DT_h \Lambda}  & \vec{0}
\end{bmatrix}, 
\end{align*}
where $\vec{\Lambda}$ is a diagonal matrix such that the columns of the matrix $\vec{DT_h \Lambda}$  have unit $\ell_2$ norm. The coherence parameters can be written in terms of sub-blocks of matrix $\vec{G}$ as follows 
\begin{align*}
\mu_h &= \| (\vec{DT_h \Lambda})^T  \vec{DT_h \Lambda} - \vec{I}    \|_{\rm{max}} \\ 
 \mu_m &= \|  \vec{DT_h \Lambda}  \|_{\rm{max}}   \\
 \mu_c &= \max \{ \mu_h, \mu_m \},
\end{align*}
where for a matrix $\vec{X}$, the maximum absolute entry of the matrix is denoted by $\| \vec{X}\|_{\rm{max}}$.

\section{Experiments}
\label{sec:exp}

Using a combination of both synthetic and real-world EDA data, in this section we demonstrate the feasibility and accuracy of our proposed compressed sensing approach to EDA decomposition.  Our synthetic data experiments sweep a wide selection of sparsity values and baseline signal energy levels to demonstrate SCR event recovery accuracy. Using real-world EDA data, we then show how our technique allows for more accurate inference of EDA events signal as compared to prior techniques.

\subsection{Synthetic Data Experiment} 
The first experiment is dedicated to demonstrating the recovery accuracy of our procedure on synthetic data. 
We obtained the impulse response vector $\vec{h}$ by sampling the function $f(u)$ shown in Figure~\ref{Fig:Impulse_Res} at the rate of $4$ samples per second  in the interval $u \in [0,40]$.
This choice of impulse response was informed by prior psychophysiology literature~\cite{alexander05}.
The $\vec{h}$ obtained in such manner lies in $\mathbb{R}^{160}$. We fixed $T=240,\delta = 0.01$ and $\gamma = 0.01$. 

\begin{figure}
\begin{center}
\includegraphics[scale=0.3]{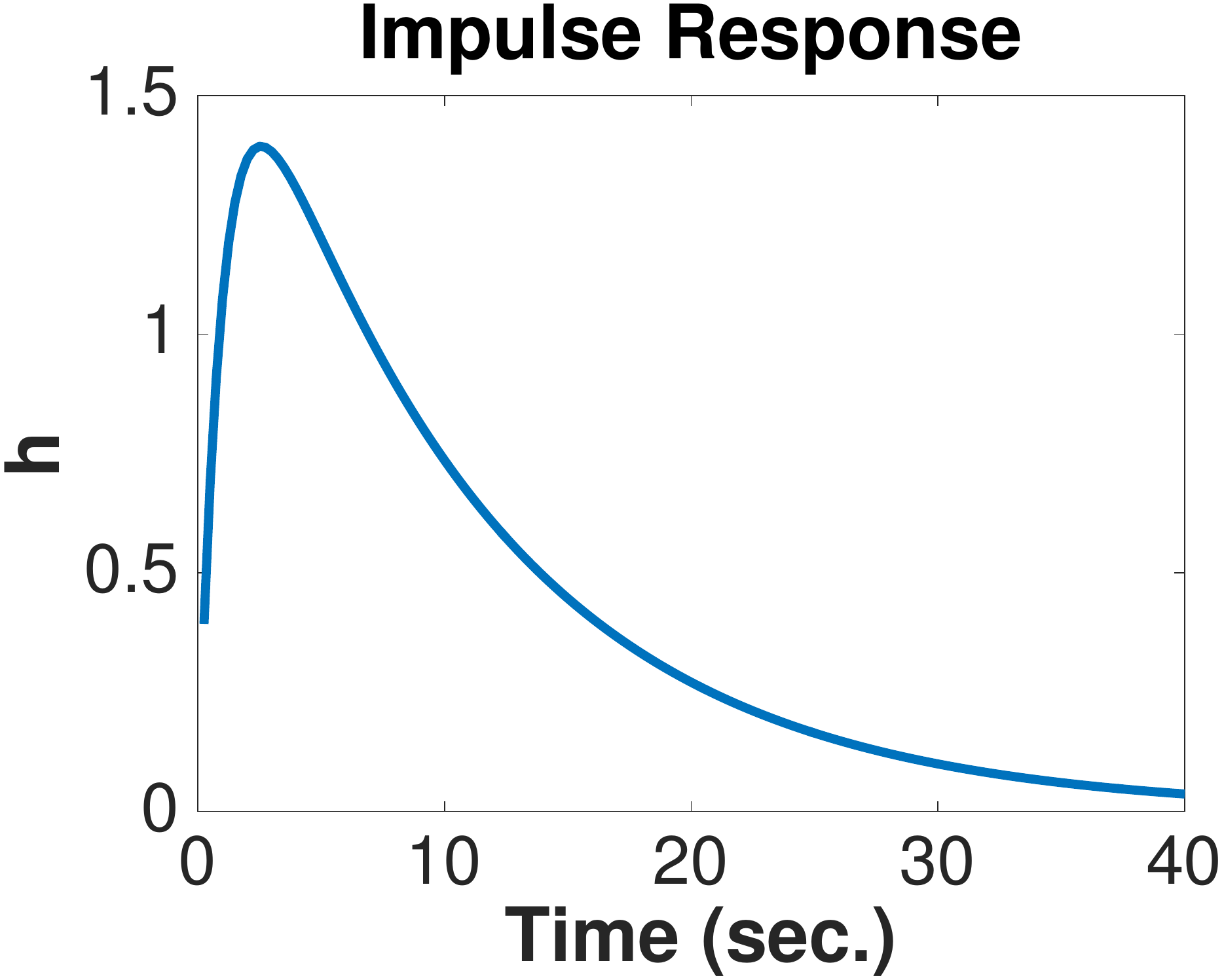}
\end{center}
\caption{ The impulse response $\vec{h}$ was obtained by sampling the function $f(u) =  2\left(e^{-\frac{u}{\tau_1}} -e^{-\frac{u}{\tau_2}} \right) $ for $u\ge 0$ and $f(u) = 0 $  otherwise. Here $ \tau_1 = 10, \tau_2 = 1$ and the is function sampled at the rate of $4$ samples per second in the interval $u \in [0,40]$.} \label{Fig:Impulse_Res}
\end{figure}

\begin{figure*}
	\centering
	\subfloat[]{\includegraphics[width=0.48\linewidth]{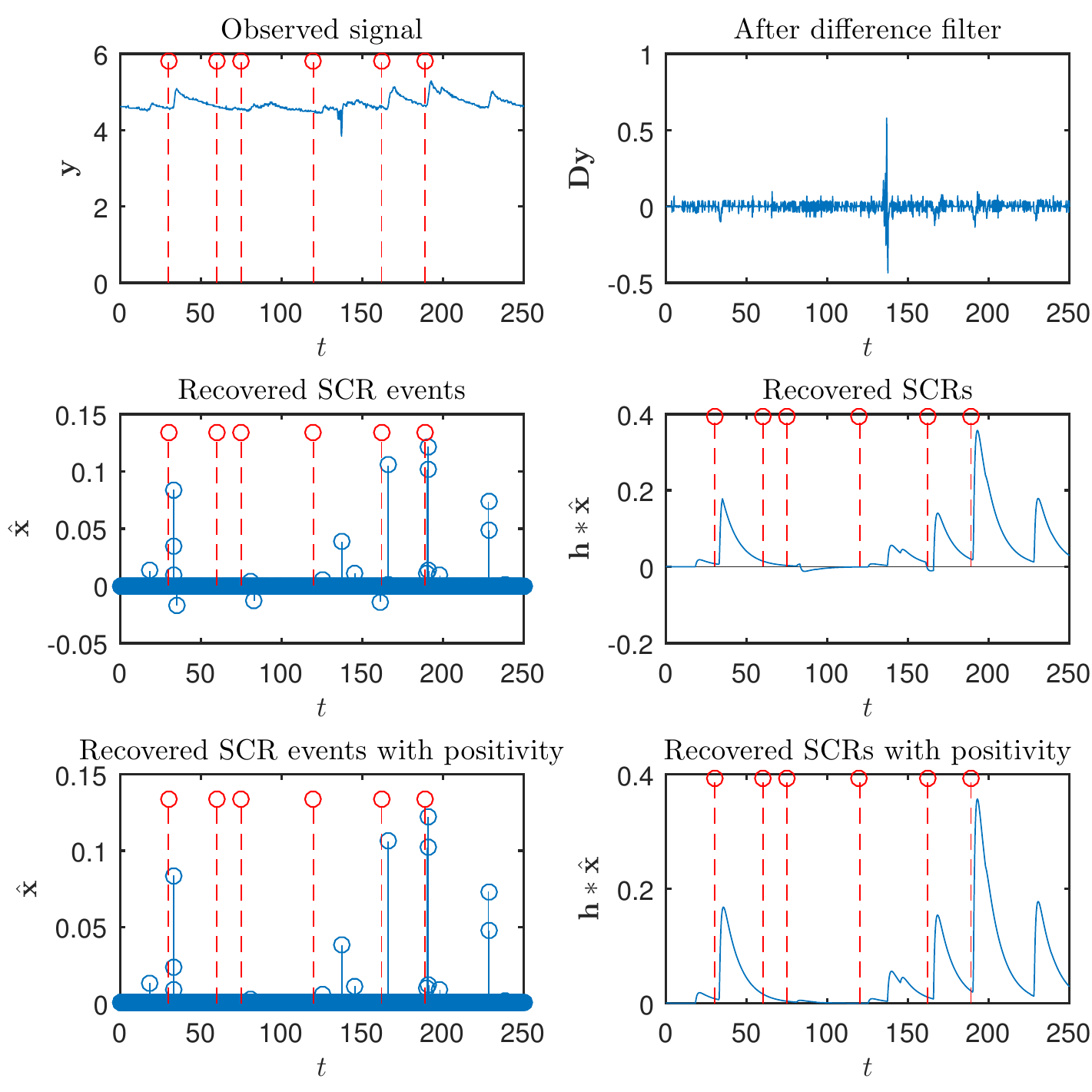}
	\label{Fig:decompose_4}}
	\quad
	\subfloat[]{\includegraphics[width=0.48\linewidth]{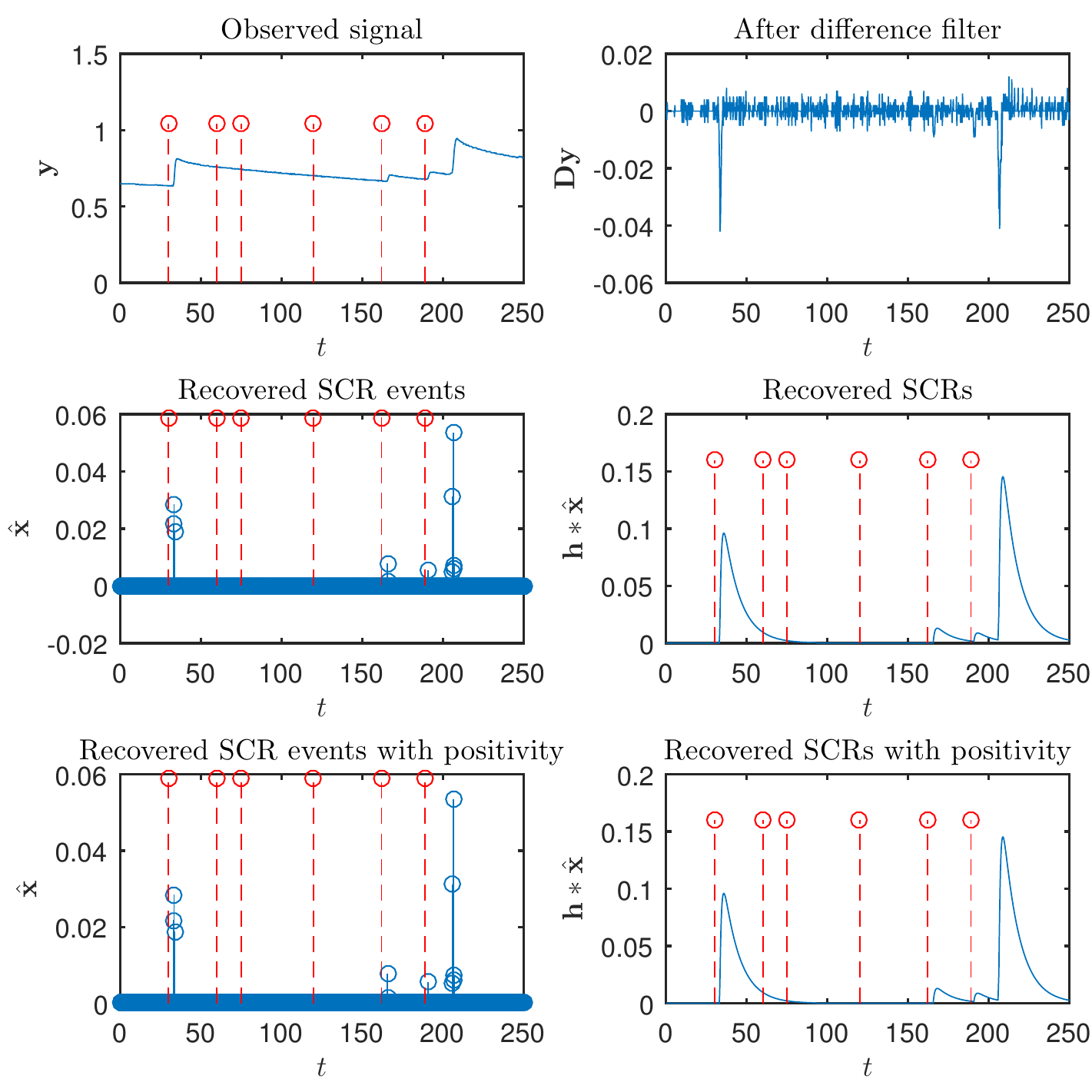}
	\label{Fig:decompose_6}}
	\caption[]{Decomposition of real-world EDA data for two users in \subref{Fig:decompose_4} and \subref{Fig:decompose_6} respectively. Stimuli are presented to the users at moments denoted by red dotted vertical lines. We show results for our compressed sensing approach with and without positivity constraints for data downsampled to 4 Hz.} \label{Fig:decompose}
\end{figure*}

For a given number of SCR events $s$ and number of baseline jumps $c$, we randomly generate $\vec{x} \in \mathcal{X}_{\delta}^s$ and $\vec{b} \in \mathcal{B}_{\gamma}^c$. A random $\vec{x} \in \mathcal{X}_{\delta}^s$  is generated  by first choosing the $s$ significant components uniformly at random and filling these components with a random vector  in $\mathbb{R}^s$ with i.i.d. exponentially distributed entries with mean $2$. This is followed by adding to it a rescaled standard Gaussian random vector in $\mathbb{R}^T$ with $\ell_1$ norm $\delta$. Similarly, a random $\vec{Db}$ was generated by first choosing the $c$ significant components uniformly at random and filling each of these components with a standard Gaussian variable followed by adding a rescaled standard Gaussian random vector in $\mathbb{R}^{t+T-2}$ with $\ell_1$ norm $\gamma$. Using these steps we generate the observations as follows: 
\begin{align}
 \vec{Dy} = \vec{D T_h x} + \alpha \vec{Db} + \vec{n},
\end{align}
 where $\vec{n}$ is also a rescaled Gaussian random vector with $\ell_2$ norm equal to $\epsilon=0.01$.  We generate multiple experiments using different values of $\alpha$, a scaling factor applied to $\vec{Db}$ relative to $\vec{D T_h x}$. These observations are then used to obtain the estimate $\vec{\hat{x}}$ by solving the problem in \eqref{prob:relaxed}  with $\eta = 1.05 \epsilon$.

Figure~\ref{Fig:PhaseDiag} shows the average relative estimation error $\frac{\|\vec{x - \hat{x}}\|_2}{\|\vec{x}\|_2}$, where the average is obtained by $30$ random observations for various values of  $s$ and $c$.   For baseline components with low energy in Figure~\ref{Fig:PhaseDiag001}, we find that the ability to recover is almost entirely dependent on the number of SCR events embedded in the generated EDA signal.  Regardless of the number of baseline jumps, we find that for fewer than $75$ SCR events in an EDA signal, we can accurately recover the SCR signal.  On the other hand, as the energy in the baseline increases, as shown in Figures~\ref{Fig:PhaseDiag01} and \ref{Fig:PhaseDiag1}, we find that a large number of jumps in the baseline signal can degrade our ability to accurately recover the SCR events.

\subsection{Experiments with Real-World EDA Data}

Our second experiment examines the performance of our methodology on real-world EDA signals.  We used EDA signals from a simple video stimulus experiment, originally published in~\cite{silveira13}.  The video consists of six short stimulus clips (each lasting less than 10 seconds) with differing levels of complexity.  Specifically, this video contains a baby crying sound, a gun shot sound, a dog barking sound, the image of a gun, and two short videos of a subject injuring themselves.  This stimulus is interspersed with silence where no audio or video is presented to the user. The EDA data consists of EDA traces from nine subjects (6 male, 3 female, with ages ranging between 20 and 50 years old) who watched the same video content in a darkened environment.  The EDA was recorded using the Affectiva Q Sensor~\cite{affectiva} with sampling at 32 Hz.

Unlike with the synthetic data experiment, we cannot assess relative estimation error $\frac{\|\vec{x - \hat{x}}\|_2}{\|\vec{x}\|_2}$ because we do know the magnitudes of the ground-truth SCR events $\vec{x}$. We do, however, know the times at which the stimulus clips and periods of silence were presented to the users. Very few SCR events should occur during the periods of silence, while many SCR events should occur during the stimulus clips, thus we can use these times to assess how well our EDA decomposition technique is able to detect SCR events. 
Specifically, we used 10 second windows around each stimulus and silence clip, and then aggregated the estimated SCR event coefficients between the start of the clip and the end of the clip.  These aggregated values are then compared to a threshold to produce a binary decision as to whether SCR events are present in the time window. The impulse response vector $\vec{h}$ was obtained by sampling the function $f(u) = 2(e^{-\frac{u}{\tau_1}} -e^{-\frac{u}{\tau_2}})$ for $u\ge 0$ and $f(u) = 0 $ otherwise. We chose $\tau_1 = 10, \tau_2 = 1 $.  For our proposed technique and defined $\vec{h}$, we obtained estimates of SCR events signal for each user by solving \eqref{prob:relaxed} with $\eta = 0.14$.

To evaluate our performance we use four alternative methodologies:  (1) aggregated raw EDA signal for each user in the stimulus and silence time windows, (2) the non-negative deconvolution analysis technique of Benedek and Kaernbach~\cite{benedek10} using the Ledalab software package~\cite{ledalab}, (3) the convex optimization approach cvxEDA proposed in \cite{cvxEDA}, and (4) a modification of our approach with positivity constraint for the SCR events signal\footnote{Specifically, we solve problem \eqref{prob:relaxed} with positivity constraint $\mathbf{x} \ge 0$.}. The raw EDA analysis will communicate if the mean EDA signal is informative with respect to our stimulus, while the deconvolution approach demonstrates EDA decomposition that ignores the prominent baseline signal. The cvxEDA approach will compare our proposed model with a recent EDA decomposition technique using convex optimization. The approach with positivity constraints will test whether including positivity constraints in our problem setup improves recovery accuracy. 

We perform experiments on the original 32 Hz data as well as 4 Hz and 8 Hz downsampled versions, which are more in-line with the sampling rates of commercially available wearable sensors such as the Empatica E4 \cite{empatica} and Microsoft Band 2 \cite{MicrosoftBand2} (4 and 5 Hz, respectively). 
For cvxEDA, the same values $\tau_1=10$ and $\tau_2=1$ as for our approach were used\footnote{cvxEDA also requires specification of the sampling interval $\delta$, which was set to $1/\text{sampling frequency}$, and other parameters $\delta_0$ , $\alpha$, and $\gamma$, which were set to the default values in the software package.}, whereas for Ledalab, $\tau_1$ and $\tau_2$ were automatically optimized by the software package.

\textit{Discussion of results:} The result of signal decomposition on  the 4 Hz downsampled signal is shown in Figure \ref{Fig:decompose}.  In this figure we highlight the recovered signals with our approach and a modified version with positivity constraints on the SCR events signal. Figures \ref{Fig:decompose_4} and \ref{Fig:decompose_6} correspond to two different users that were chosen at random from our data set. Stimuli are presented to the users at moments denoted by red dotted vertical lines. We see that the recovered SCR events signal is similar for both techniques except for the events with small negative amplitudes when no positivity constraints are enforced. The reconstructed SCR signal $ \mathbf{h} * \hat{ \mathbf{x} }$ using both approaches are also shown. Overall, we find that our proposed approach performs similarly to its variation with positivity constraints.

\begin{figure}[t]
\centering
\includegraphics[width=0.99\linewidth]{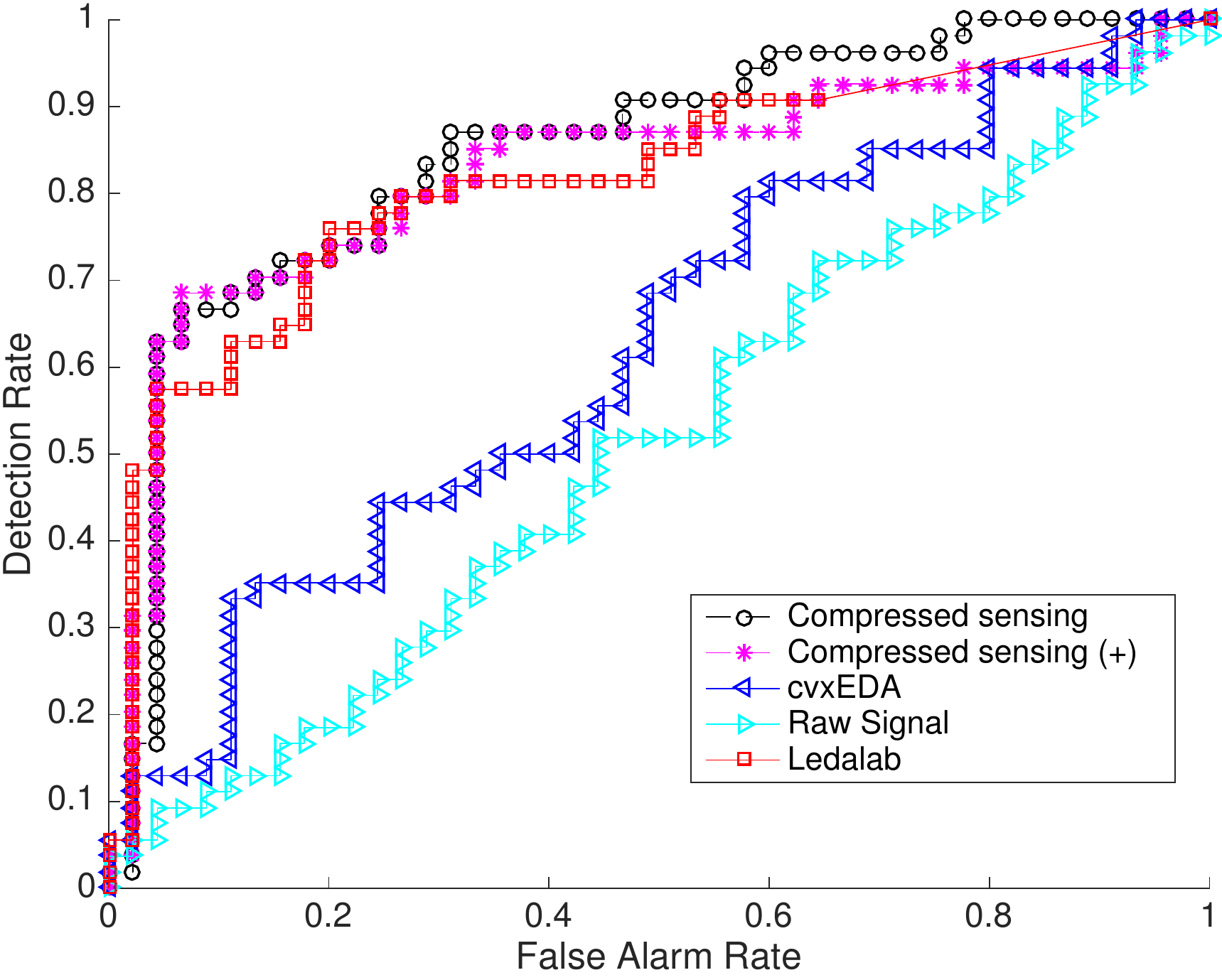} 
\caption{ROC curves for real data SCR event detection experiment at the sampling rate of 4 Hz. Our compressed sensing-based approaches is compared with a variation of our approach with positivity constraints, the non-negative deconvolution approach in Ledalab, the cvxEDA convex optimization based approach, and the raw EDA signal.} \label{Fig:ROC}
\end{figure}

\begin{table}
	\renewcommand{\arraystretch}{1.2}	
	\begin{center}	
	\caption{AUC values for SCR event detection at multiple sampling rates for various approaches on real data experiment.} 
	\label{table:AUC}	
	\tabcolsep=0.18cm
		\begin{tabular}{ | c | c | c | c | c | c |}
			\hline
			  Sampling  & Compressed   & Compressed     &               & Raw     &    \\ 
		     Rate       & Sensing      & Sensing $(+)$  & cvxEDA        & Signal  &  Ledalab       \\ \hline
			4  Hz       & {\bf 0.848}  & 0.825          & 0.622         & 0.539	  &  0.817	        \\
			8  Hz       & {\bf 0.857}  & 0.821          & 0.771         & 0.493	  &  0.824          \\
			32 Hz       & 0.868        & {\bf 0.895}    & 0.819         & 0.514	  &  0.837          \\ \hline
		\end{tabular}	
	\end{center}	
\end{table}

Further, aggregating the accuracy across all nine users, we present the Receiver Operating Characteristic (ROC) curve in Figure \ref{Fig:ROC}, which shows the detection rate for any given false alarm rate at the sampling rate of $4$ Hz. We summarize the ROC curve using the Area Under the Curve (AUC).  We find that our compressed sensing based decomposition (AUC = \textbf{0.848}) and its variation with positivity constraints (AUC = \textbf{0.825}) perform better than both the non-negative deconvolution method in Ledalab (AUC = \textbf{0.817}) and the convex optimization based cvxEDA approach (AUC = \textbf{0.622}). Another insight from these results is that using the raw EDA traces results in accuracy roughly no better than random guessing (i.e., detection rate equal to the false alarm rate), showing the need for processing of the observed EDA signals. 

The results at various sampling rates are shown in Table \ref{table:AUC}. We see that our scheme gives better performance than all other schemes at sampling rates 4 Hz and 8 Hz.  This is an important regime when considering EDA observations from power and storage-constrained wearables.  Our observations also suggest that, at these sampling rates, adding positivity constraints to our approach does not necessarily improve accuracy. In fact, at 4 Hz and 8 Hz, adding positivity constraints actually lowered the AUC.  The only improvements for the positivity constrained techniques was at a sampling rate of 32 Hz.

\section{Conclusions}
\label{sec:conclusions}

In this work we proposed a novel compressed sensing based framework for processing of EDA signals. The proposed framework explicitly models the baseline signal and allows for recovery of the users responses via simple pre-processing followed by compressed sensing based decomposition. We also provided theoretical error bounds on the accuracy of the proposed recovery procedure. Our approach accurately recovers SCR events in experiments on simulated data. Furthermore, our recovery procedure also outperforms existing recovery procedures for an SCR event detection task on real-world EDA data obtained from a video stimulus experiment. 

Future works include considering modified EDA signal models that vary the shape of the impulse response with time and varied noise models, developing computationally efficient algorithms that exploit the Toeplitz structure, exploring the possibility of better recovery guarantees by considering random signal models and with positivity constraints.

\appendix
\begin{proof}[Proof of Theorem \ref{thm:error_bound}]

The proof is a straightforward extension of the following theorem from \cite{studer2014stable}:
\begin{theorem}[~\cite{studer2014stable}, Thm. 4 ] \label{thm:studer}
Let $\vec{t} = \vec{Cw}  + \vec{z}$, with $\vec{C} = [ \vec{A} \ \vec{B}]$, $\vec{w}^{T} = [\vec{x}^{T} \ \vec{u}^{T}]$, and $\| \vec{z} \|_2 \le \epsilon$. 
Define the coherence parameters $\mu_a,\mu_b, \mu_m$, and $\mu_c$ for the dictionary $\vec{C}$ as
\begin{align*}
\mu_a = \max_{i \ne j} \ \frac{ | \vec{a}_i^T \vec{a}_j   |}{ \|\vec{a}_i\|_2 \|\vec{a}_j\|_2 }, \ \mu_b = \max_{i \ne j} \ \frac{ | \vec{b}_i^T \vec{b}_j   |}{ \|\vec{b}_i\|_2 \|\vec{b}_j\|_2 } \\
\mu_m = \max_{i , j} \ \frac{ | \vec{a}_i^T \vec{b}_j   |}{ \|\vec{a}_i\|_2 \|\vec{b}_j\|_2 }, \ \mu_c = \max_{i \ne j} \ \frac{ | \vec{c}_i^T \vec{c}_j   |}{ \|\vec{c}_i\|_2 \|\vec{c}_j\|_2 }
\end{align*}
Assume $\mu_b \le \mu_a$ without loss of generality. If 
\begin{align}\label{eqn:sparsity_bound_thm_studer}
 s + c < \textrm{max}\left\{ \frac{2(1+\mu_a)}{\mu_a + 2 \mu_c + \sqrt{\mu_a^2 + \mu_m^2}}    , \frac{1+\mu_c}{2 \mu_c}\right\} 
\end{align}
then the solution of $\vec{\hat w}$
\begin{align}\label{eqn:opt_prob_thm_studer}
	\min_{\mathbf{\tilde w}}   \quad  &  \| \mathbf{\tilde w} \|_1  \nonumber \\
	\textrm{ subject to } \quad &  \left\| \vec{t} - \vec{C} \vec{\tilde w} \right\|_2 \le \eta, 
\end{align}
using $\epsilon \le \eta$ satisfies 
\begin{align*}
\| \vec{w} - \vec{\hat w}\|_2 \le C_1 (\epsilon + \eta) + C_2 \| \vec{w} - \vec{w}_{n + s} \|_1
\end{align*}
 where  $C_1, C_2 > 0$ depend on  $\mu_a,\mu_b, \mu_m$, $\mu_c$, $s$, and $c$.
\end{theorem}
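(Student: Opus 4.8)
\emph{Proof plan.} The statement is, in essence, the specialization of Theorem~\ref{thm:studer} to the preprocessed EDA model, so the plan is to exhibit the correct dictionary/signal/noise identification, verify the three hypotheses of Theorem~\ref{thm:studer} (coherence normalization, sparsity budget, noise level), and then translate the generic error bound back into the model constants $\delta,\gamma,\epsilon,\eta$. Starting from the differenced observation
\[
\vec{Dy} = \begin{bmatrix} \vec{DT_h} & \vec{I} \end{bmatrix}\begin{bmatrix} \vec{x} \\ \vec{Db} \end{bmatrix} + \vec{Dn},
\]
I would set $\vec{t} = \vec{Dy}$, $\vec{A} = \vec{DT_h}$, $\vec{B} = \vec{I}$, $\vec{C} = [\,\vec{DT_h}\ \vec{I}\,]$, $\vec{w} = [\vec{x}^{T}\ (\vec{Db})^{T}]^{T}$, and $\vec{z} = \vec{Dn}$. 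With these assignments the convex program \eqref{prob:relaxed} is literally the program \eqref{eqn:opt_prob_thm_studer}, so its optimizer $(\vec{\hat x},\vec{\hat u})$ is the $\vec{\hat w}$ of Theorem~\ref{thm:studer}.

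\emph{Matching the coherences and the sparsity budget.} Because $\vec{a}_i = \vec{t}_i$ are the columns of $\vec{DT_h}$ and $\vec{b}_j = \vec{e}_j$ are the columns of $\vec{I}$, the parameters $\mu_a,\mu_m,\mu_c$ of Theorem~\ref{thm:studer} coincide term by term with $\mu_h,\mu_m,\mu_c$ as defined here. The remaining quantity $\mu_b$ is the mutual coherence of the columns of $\vec{I}$; since these are orthonormal, $\mu_b = 0$, so the normalization $\mu_b\le\mu_a$ demanded by Theorem~\ref{thm:studer} holds automatically, and substituting $\mu_a=\mu_h$ into \eqref{eqn:sparsity_bound_thm_studer} reproduces verbatim the hypothesis on $s+c$ in the present statement. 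Here I would note the mild regularity that every column of $\vec{DT_h}$ is nonzero, so the coherences are well defined; this holds for any nonconstant impulse response $\vec{h}$.

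\emph{Noise and approximation error.} The one quantitative step is controlling the amplification of the noise by the difference operator. I would bound $\|\vec{z}\|_2 = \|\vec{Dn}\|_2 \le \|\vec{D}\|_2\,\|\vec{n}\|_2$ and observe that $\vec{D}\vec{D}^{T}$ is the tridiagonal second-difference matrix with eigenvalues $2-2\cos\!\big(k\pi/(t+T-1)\big) < 4$, whence $\|\vec{D}\|_2 < 2$. Combined with $\|\vec{n}\|_2 \le \epsilon/2$ this yields $\|\vec{z}\|_2 \le \epsilon$, exactly the noise level Theorem~\ref{thm:studer} tolerates (and the reason for the factor $1/2$ in the noise model). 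For the tail term, I would take the explicit $(s+c)$-sparse vector $[\vec{x}_s^{T}\ ((\vec{Db})_c)^{T}]^{T}$ and use the definitions of $\mathcal{X}_\delta^s$ and $\mathcal{B}_\gamma^c$ to get $\|\vec{w}-\vec{w}_{s+c}\|_1 \le \|\vec{x}-\vec{x}_s\|_1 + \|\vec{Db}-(\vec{Db})_c\|_1 \le \delta+\gamma$. Feeding both estimates into the conclusion of Theorem~\ref{thm:studer} gives $\|\vec{w}-\vec{\hat w}\|_2 \le C_1(\epsilon+\eta)+C_2(\delta+\gamma)$.

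\emph{Extracting $\vec{x}$ and the main difficulty.} Finally, since $\vec{x}-\vec{\hat x}$ is the leading sub-block of $\vec{w}-\vec{\hat w}$, the elementary inequality $\|\vec{x}-\vec{\hat x}\|_2 \le \|\vec{w}-\vec{\hat w}\|_2$ (from $\|\vec{w}-\vec{\hat w}\|_2^2 = \|\vec{x}-\vec{\hat x}\|_2^2 + \|\vec{Db}-\vec{\hat u}\|_2^2$) delivers the claimed bound; the constants $C_1,C_2$ are inherited directly from Theorem~\ref{thm:studer} with $\mu_a$ replaced by $\mu_h$ and $\mu_b=0$, so they depend only on $\mu_h,\mu_m,\mu_c,s,c$. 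The argument is thus almost entirely a reduction; the only step that is not pure notation-matching is the spectral-norm bound $\|\vec{D}\|_2<2$, and I expect that — rather than any new recovery analysis — to be the one place where a genuine (if elementary) estimate is needed to certify that preprocessing keeps the transformed noise inside the admissible radius.
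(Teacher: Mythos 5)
Your reduction coincides, step for step, with the paper's own appendix argument (note the statement itself is imported from \cite{studer2014stable} and never proved in the paper; what the paper proves is Theorem~\ref{thm:error_bound} by exactly the reduction you give): the same identifications $\vec{t}=\vec{Dy}$, $\vec{A}=\vec{DT_h}$, $\vec{B}=\vec{I}$, $\vec{z}=\vec{Dn}$, $\vec{w}^{T}=[\vec{x}^{T}\ (\vec{Db})^{T}]$, the same observations that $\mu_b=0$ and $\|\vec{Dn}\|_2\le\|\vec{D}\|_2\|\vec{n}\|_2\le\epsilon$, the same tail estimate $\|\vec{w}-\vec{w}_{s+c}\|_1\le\|\vec{x}-\vec{x}_s\|_1+\|\vec{Db}-(\vec{Db})_c\|_1\le\delta+\gamma$, and the same block extraction $\|\vec{x}-\vec{\hat x}\|_2\le\|\vec{w}-\vec{\hat w}\|_2$. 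Your only departures are small improvements: you actually justify $\|\vec{D}\|_2<2$ via the eigenvalues $2-2\cos\bigl(k\pi/(t+T-1)\bigr)$ of the tridiagonal matrix $\vec{D}\vec{D}^{T}$, where the paper merely asserts $\|\vec{D}\|_2\le 2$, and you write the baseline tail term correctly as $\|\vec{Db}-(\vec{Db})_c\|_1$, where the paper's proof has the slip $\|\vec{b}-\vec{b}_c\|_1$.
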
 We use the above Theorem \ref{thm:studer} with $\vec{t} = \vec{Dy}, \vec{A} =  \vec{DT_h}, \vec{B} =  \vec{I} , \vec{z} =  \vec{Dn}$, and $\vec{w}^{T} =   [ \vec{x}^{T} \ \vec{u} ]$ with $\vec{u} = \vec{Db}$. First we show that the $\ell_2$ norm of the noise $\vec{z}$ satisfies the assumption in Theorem \ref{thm:studer}. This can be easily seen as follows
\begin{align*}
\| \vec{z} \|_2 &= \| \vec{Dn} \|_2  \\
& \le \| \vec{D} \|_2 \| \vec{n} \|_2 \\
& \le 2 (\epsilon / 2) = \epsilon,
\end{align*}
where the last inequality is due the fact that $\| \vec{D} \|_2 \le 2 $ and $\| \vec{n} \|_2 \le \epsilon / 2 $ by our model assumption. Also, as $\vec{B} = \vec{I}$ is an orthonormal matrix, it is easy to see that $\mu_b = 0$. Since $\mu_a$ is strictly positive under our model assumption, the condition $\mu_b \le \mu_a$ is also satisfied. Further, since we can write $\|\vec{\tilde w} \|_1 = \| \vec{\tilde x} \|_1 + \|\vec{\tilde u}\|_1 $, the optimization problem \eqref{eqn:opt_prob_thm_studer} in Theorem \ref{thm:studer} takes the following form:
	\begin{align*}
	\min_{\mathbf{\tilde x} \in \mathbb{R}^T, \mathbf{\tilde  u} \in \mathbb{R}^{T+t-2}}   \quad  &  \| \mathbf{\tilde x} \|_1 + \| \mathbf{ \tilde u} \|_1  \\
	\textrm{ subject to } \quad &  \left\| \mathbf{Dy} - [ \mathbf{D T_{h}} \ \mathbf{I} ]\begin{bmatrix} \mathbf{ \tilde x} \\ \mathbf{ \tilde  u}  \end{bmatrix}  \right\|_2 \le \eta 
	\end{align*}
The above problem is exactly same as the problem in \eqref{prob:relaxed}, for which error bounds are outlined in Theorem \ref{thm:error_bound}. This essentially establishes that Theorem \ref{thm:studer} can be used to obtain the recovery guarantees of problem \eqref{prob:relaxed}.
Provided that the combined sparsity $s + c$ satisfies condition \eqref{eqn:sparsity_bound_thm_studer} and we choose $\eta$ such that it satisfies $\epsilon \le \eta$, we have the following bound from Theorem \ref{thm:studer}: 
\begin{align*}
\|\vec{w} - \vec{ \hat w} \|_2  &= \sqrt{ \|\vec{x} -  \vec{\hat x}\|_2^2 + \| \vec{Db} - \vec{\hat u}  \|_2^2}  \\
& \le C_1 (\epsilon + \eta) + C_2 \| \vec{w} - \vec{w}_{n+s}\|_1  \\
& \le C_1 (\epsilon + \eta) + C_2 \left\{ \| \vec{x} - \vec{x}_s \|_1 + \| \vec{b}- \vec{b}_c\|_1 \right\}
\end{align*}
Further, combining the above inequality with the fact that
\[
\|\vec{w} -\vec{\hat w} \|_2 =  \sqrt{ \|\vec{x} -  \vec{\hat x}\|_2^2 + \| \vec{Db}- \vec{\hat u}  \|_2^2} \ge \|\vec{x}-  \vec{\hat x}\|_2,
\]
we have arrive at
\begin{align*}
\| \vec{x} - \vec{\hat x}\|_2 \le C_1 (\epsilon + \eta) + C_2 \left\{ \| \vec{x} - \vec{x}_s \|_1 + \| \vec{b}- \vec{b}_c \|_1 \right\},
\end{align*}
which, by our model assumption, can be reduced to
\begin{align*}
\| \vec{x}- \vec{\hat x}\|_2 \le C_1 (\epsilon + \eta) + C_2 (\delta +  \gamma).
\end{align*}
The coherence parameters $\mu_h, \mu_m, \text{ and } \mu_c$ in Theorem \ref{thm:error_bound} are equivalent to coherence parameters $\mu_a, \mu_m $, and $\mu_c$ respectively in Theorem \ref{thm:studer}.
\end{proof}

\bibliographystyle{IEEEtran}  
\bibliography{references}  


\end{document}